\def\BibTeX{{\rm B\kern-.05em{\sc i\kern-.025em b}\kern-.08em
    T\kern-.1667em\lower.7ex\hbox{E}\kern-.125emX}}
\newtheorem{lemma}{Lemma}
\newcommand{\GloVe}{\s{GloVe}\xspace}
\newcommand{\WordToVec}{\s{word2vec}\xspace}
\newcommand{\RAW}{\s{RAW}\xspace}
\newcommand{\AO}{\textsc{AbsoluteOrientation}}
\newcommand{\AOc}{\textsc{AO-Centered}}
\newcommand{\AOR}{\textsc{AO-Rotation}}
\newcommand{\AOs}{\textsc{AO+Scaling}}
\newcommand{\AOsc}{\textsc{AO-Centered+Scaling}}
\newcommand{\AOW}{\textsc{AO+Weighted}}
\newcommand{\AON}{\textsc{AO-Normalized}}
\newcommand{ \AOWts}{\textsc{AO+Centered+Weighted}}
\newcommand{\R}{\ensuremath{\mathbb{R}}}
\newcommand{\SO}{\ensuremath{\mathbb{SO}}}
\newcommand{\OG}{\ensuremath{\mathbb{O}}}
\newcommand{\RG}{\textsc{RG}\xspace}
\newcommand{\WSim}{\textsc{WSim}\xspace}
\newcommand{\MC}{\textsc{MC}\xspace}
\newcommand{\Simlex}{\textsc{Simlex}\xspace}
\newcommand{\SEM}{\textsc{SEM}\xspace}
\newcommand{\SYN}{\textsc{SYN}\xspace}
\newcommand{\update}[1]{#1}
\newcommand{\myParagraph}[1]{\vspace{1mm}\noindent {\textbf{#1}.}}
\newcommand{\ve}[1]{\ensuremath{v_{\textsf{#1}}}}
\begin{document}

\title{Closed Form Word Embedding Alignment\\
%
}

\author{\IEEEauthorblockN{Sunipa Dev}
\IEEEauthorblockA{\textit{School of Computing} \\
Salt Lake City, USA\\
sunipad@cs.utah.edu}
\and
\IEEEauthorblockN{Safia Hassan}
\IEEEauthorblockA{\textit{School of Computing} \\
Salt Lake City, USA\\
safiahassan609@gmail.com}
\and
\IEEEauthorblockN{Jeff M Phillips}
\IEEEauthorblockA{\textit{School of Computing} \\
Salt Lake City, USA\\
jeffp@cs.utah.edu}
}
\maketitle

\begin{abstract}
We develop a family of techniques to align word embeddings which are derived from different source datasets or created using different mechanisms (e.g., GloVe or word2vec).  
Our methods are simple and have a closed form to optimally rotate, translate, and scale to minimize root mean squared errors or maximize the average cosine similarity  between two embeddings of the same vocabulary into the same dimensional space.  
Our methods extend approaches known as Absolute Orientation, which are popular for aligning objects in three-dimensions, and generalize an approach by Smith \etal (ICLR 2017).  
We prove new results for optimal scaling and for maximizing cosine similarity.  		
Then we demonstrate how to evaluate the similarity of embeddings from different sources or mechanisms, and that certain properties like synonyms and analogies are preserved across the embeddings and can be enhanced by simply aligning and averaging ensembles of embeddings.  
\end{abstract}

\begin{IEEEkeywords}
word embeddings
\end{IEEEkeywords}

	\section{INTRODUCTION}

\vspace{-1mm}

Embedding complex data objects into a high-dimensional, but easy to work with, feature space has been a popular paradigm in data mining and machine learning for more than a decade~\cite{RR07,RR08,hashkernels,featurehash}.  
This has been especially prevalent recently as a tool to understand language, with the popularization through \WordToVec~\cite{Mik1,Art5} and \GloVe~\cite{Art3}.  These approaches take as input a large corpus of text, and map each word which appears in the text to a vector representation in a high-dimensional space (typically $d=300$ dimensions).  

These word vector representations began as attempts to estimate similarity between words based on the context of their nearby text, or to predict the likelihood of seeing words in the context of another.  Other more powerful properties were discovered.  Consider each word gets mapped to a vector $\ve{word} \in \R^d$.  

\begin{itemize}
\item \emph{Synonym similarity:}  
Two synonyms (e.g., $\ve{car}$ and $\ve{automobile}$) tend to have small Euclidean distances and large inner products, and are often nearest neighbors.  

\item \emph{Linear relationships:}  
For instance, the vector subtraction between countries and capitals (e.g., $\ve{Spain}-\ve{Madrid}$, $\ve{France}-\ve{Paris}$, $\ve{Germany}-\ve{Berlin}$) are similar.  Similar vectors encode gender (e.g., $\ve{man}-\ve{woman}$), tense ($\ve{eat}-\ve{ate}$), and degree ($\ve{big}-\ve{bigger}$).  

\item \emph{Analogies:} 
The above linear relationships could be transferred from one setting to another.  For instance the gender vector $\ve{man}-\ve{woman}$ (going from a female object to a male object) can be transferred to another more specific female object, say $\ve{queen}$.  Then the result of this vector operation is $\ve{queen} + (\ve{man} - \ve{woman})$ is close to the vector $\ve{king}$ for the word ``\s{king}.''   This provides a mechanism to answer analogy questions such as ``\s{woman:man::queen:}?'' 

\item \emph{Classification:}  
More classically~\cite{RR07,RR08,hashkernels,featurehash}, one can build linear classifiers or regressors to quantify or identify properties like sentiment.  
\end{itemize}

At least in the case of \GloVe, these linear substructures are not accidental; the embedding aims to preserve inner product relationships.  Moreover, these properties all enforce the idea that these embeddings are useful to think of inheriting a Euclidean structure, i.e., it is safe to represent them in $\R^d$ and use Euclidean distance.  

However, there is nothing extrinsic about any of these properties.  A rotation or scaling of the entire dataset will not affect synonyms (nearest neighbors), linear substructures (dot products), analogies, or linear classifiers.  A translation will not affect distance, analogies, or classifiers, but will affect inner products since it effectively changes the origin.  These substructures (i.e., metric balls, vectors, halfspaces) can be transformed in unison with the embedded data.  
Indeed Euclidean distance is the only metric on $d$-dimensional vectors that is rotation invariant.

The intrinsic nature of these embeddings and their properties adds flexibility that can also be a hinderance.  In particular, we can embed the same dataset into $\R^d$ using two approaches, and these structures cannot be used across datasets.  Or two data sets can both be embedded into $\R^d$ by the same embedding mechanism, but again the substructures do not transfer over.  That is, the same notions of similarity or linear substructures may live in both embeddings, but have different meaning with respect to the coordinates and geometry.  
This makes it difficult to compare approaches; the typical way is to just measure a series of accuracy scores, for instance in recovering synonyms~\cite{Lev2,Mik1}.  However, these single performance scores do not allow deeper structural comparisons.  

%

Another issue is that it becomes challenging (or at least messier) to build ensemble structures for embeddings.  For instance, some groups have built word vector embeddings for enormous datasets (e.g., \GloVe embedding using 840 billion tokens from Common Crawl, or the \WordToVec embedding using 100 billion tokens of Google News), which costs at least tens of thousands of dollars in cloud processing time.  Given several such embeddings, how can these be combined to build a new single better embedding without revisiting that processing expense?  How can a new (say specialized) data set from a different domain use a larger high-accuracy embedding?

\myParagraph{Our approach and results}
In this paper we provide a simple closed form method to optimally align two embeddings.  These methods find optimal rotation (technically an orthogonal transformation) of one dataset onto another, and can also solve for the optimal scaling and translation.  
They are optimal in the sense that they minimize the sum of squared errors under the natural Euclidean distance between all pairs of common data points, or they can maximize the average cosine similarity.  

The methods we consider are easy to implement, and are based on $3$-dimensional shape alignment techniques common in robotics and computer vision called ``absolute orientation.''
We observe that these approaches extend to arbitrary dimensions $d$; the same solution for the optimal orthogonal transformation was also recently re-derived by Smith \etal (2017)~\cite{SmithTHH17}.  

In this paper, we also show that an approach to choose the optimal scaling of one dataset onto another~\cite{Hor87} does not affect the optimal choice of rotation.  Hence, the choice of translation, rotation, and scaling can all be derived with simple closed form operations.  

We then apply these methods to align various types of word embeddings, providing new ways to compare, translate, and build ensembles of them.  
We start by aligning data sets to themselves with various types of understandable noise; this provides a method to calibrate the error scores reported in other settings.  
We also demonstrate how these aligned embeddings perform on various synonym and analogy tests, whereas without alignment the performance is very poor.  The results with scaling, translation, and weighting all consistently improve upon the results for only rotation as advocated by Smith \etal (2017)~\cite{SmithTHH17}.  

Moreover, we show that we can boost embeddings, showing improved results when aligning various embeddings, and taking simple averages of the embedded words from different data sets.  The results from these boosted embeddings provide the best known results for various analogy and synonym tests.  
More extensive use of ensembles should be possible, and it could be applied to a wider variety of data types where Euclidean feature embeddings are known, such as for graphs~\cite{DeepWalk,node2vec,CZC17,GF17,DCS17}, images~\cite{SIFT,SURF}, and for kernel methods~\cite{RR07,RR08}. 

This alignment can also aid translation, wherein an alignment learned from a small set of words whose translation is known, we can obtain an alignment of a much larger set of words.  We also show how aligning two low-resource languages independently to a well-documented and accurate intermediate language can aid in translation between the first two languages. 

Finally, in the last few years, contextualized embeddings, such as BERT \cite{bert} and ELMo \cite{Peters:2018}, which embed a word differently each time, based on the context it appears in, have become increasingly pervasively used in language processing tasks such as textual entailment and co-reference resolution.  We show that a simple average of the contexts allows our techniques to efficiently extend to modeling a more complex multi-way alignment among word representations.

\update{
\subsection{Word Embedding Mechanisms}
There are several different mechanisms today to embed words into a high dimensional vector space. They can primarily be divided into mechanisms: first, those that produce a single vector for each word (e.g., \GloVe~\cite{Art3}), thus leading to a dictionary like structure, and the second (ELMO~\cite{Peters:2018}, BERT~\cite{bert}) producing a function instead of a vector for a word such that given a context, the vector for the word is generated. This implies that, different word senses (river $bank$ versus financial institution $bank$) lead to differently embedded representations of the same word. 
Our experiments primarily examine the first kind as vector distances are interpretable there.}

\update{
The word embedding mechanisms we use here are:} 

\update{\textbf{\RAW:} has as many dimensions as there are words, each dimension corresponds with the rate of co-occurrences with a particualr word; it is in some sense what other sophisticated models such as \GloVe are trying to understand and approximate at much lower dimensions. }

\update{\textbf{\GloVe:} uses an unsupervised learning algorithm~\cite{Art3} for obtaining vector representations for words based on their aggregated global word-word co-occurrence statistics from a corpus.}

\update{\textbf{\WordToVec:} builds representations of words so the cosine similarity of their embeddings can be used to predict a word that would fit in a given context and can be used to predict the context that would be an appropriate fit for a given word.  }

\update{\textbf{FastText:} scales~\cite{arm2016bag} these methods of deriving word representations to be more usable with larger data with less time cost. The also include sub-word information~\cite{bojanowski2016enriching} to be able to generate word embeddings for words unseen during training. We use FastText word representations for Spanish and French from the library provided (\url{https://fasttext.cc/docs/en/crawl-vectors.html}) for our translation experiments in Section \ref{app : langauges}.}

\update{More recently, contextual embeddings which produce vectors for word in every distinct context have become popular, such as ELMO~\cite{Peters:2018} and BERT~\cite{bert}. These embeddings differ from the other embeddings mentioned above in that they do not give a look-up table/dictionary in the end wherein each word has one exact corresponding vector representation. We describe an extension of our method of Absolute Orientation for alignment for these embeddings in Section \ref{sec : contextual}.}


\section{CLOSED FORM POINT SET ALIGNMENT: CLASSIC AND NEW RESULTS}

\label{sec:AO}

In many classic computer vision and shape analysis problems, a common problem is the alignment of two (often $3$-dimensional) shapes.  The most clean form of this problem starts with two points sets $A = \{a_1, a_2, \ldots, a_n\}$ and $B = \{b_1, b_2, \ldots, b_n\}$, each of size $n$, where each $a_i$ corresponds with $b_i$ (for all $i \in 1,2,\ldots,n$).  Generically we can say each $a_i, b_i \in \R^d$ (without restricting $d$), but as mentioned the focus of this work was typically restricted to $d=3$ or $d=2$.  Then the standard goal was to find a rigid transformation -- a translation $t \in \R^d$ and rotation $R \in \SO(d)$ -- to minimize the root mean squared error (RMSE).  
An equivalent formulation is to solve for the sum of squared errors as
\begin{equation}\label{eq:opt-R+t}
(R^*, t^*)  = \argmin_{t \in \R^d, R \in \SO(d)} \sum_{i=1}^n \|a_i - (b_i R + t)\|^2.
\end{equation}
For instance, this is one of the two critical steps in the well-known iterative closest point (ICP) algorithm~\cite{BM92,CM92}.  

In the 1980s, several \emph{closed form} solutions to this problem were discovered; their solutions were referred to as solving \emph{absolute orientation}.  
The most famous paper by Horn~\cite{Hor87} uses unit quaternions.  However, this approach seems to have been known earlier~\cite{FH83}, and other techniques using rotation matrices and the SVD~\cite{HN81,AHB87}, rotation matrices and an eigen-decomposition~\cite{SS87,Sch66},  and dual number quaternions~\cite{WSV91}, have also been discovered.  In $2$ or $3$ dimensions, all of these approaches take linear (i.e., $O(n)$) time, and in practice have roughly the same run time~\cite{ELF97}.

In this document, we focus on the Singular Value Decomposition(SVD)-based approach of Hanson and Norris~\cite{HN81}, since it is clear, has an easy analysis, and unlike the quaternion-based approaches which only work for $d=3$, generalizes to any dimension $d$. \update{A singular value decomposition(SVD) factorizes a matrix of dimensions $m \times n$ to produce two orthonormal matrices ($U$ and $V$) and a diagonal matrix ($S$) to satisfy the linear transformation $x = Ax$. The orthonormal matrices capture the rotation or reflection of the space while the diagonal matrix $S$ captures the singular values which interpret the magnitude of information along each of the respective dimensions.
Hanson and Norris's approach decouple the rotation from the translation and solve for each independently. It further uses the orthonormal matrices produced by SVD to determine the rotation.}  In particular, this approach first finds the means $\bar a = \frac{1}{n} \sum_{i=1}^n a_i$ and $\bar b = \frac{1}{n} \sum_{i=1}^n b_i$ of each data set.  Then it creates centered versions of those data sets $\hat A \leftarrow (A,\bar a)$ and $\hat B \leftarrow (B, \bar b)$.  
%
Next we need to compute the RMSE-minimizing rotation (all rotations are then considered around the origin) on centered data sets $\hat A$ and $\hat B$.  First compute the sum of outer products $H = \sum_{i=1}^n \hat b_i^T \hat a_i$, which is a $d \times d$ matrix.  We emphasize $\hat a_i$ and $\hat b_i$ are row vectors, so this is an outer product, not an inner product.  Next take the singular value decomposition of $H$ so $[U, S, V^T] = \mathsf{svd}(H)$, and the ultimate rotation is $R = U V^T$.  We can create the rotated version of $B$ as $\tilde B = \hat B R$ so we rotate each point as $\tilde b_i = \hat b_i R$.  



Within this paper we will use this approach, as outlined in Algorithm \ref{alg:AO-rotate}, to align several data sets each of which have no explicit intrinsic properties tied to their choice of rotation.  
We in general do not use the translation step for two reasons.  First, this effectively changes the origin and hence the inner products.  Second, we observe the effect of translation is usually small, and typically does not improve performance.  

\begin{algorithm}
	\caption{\label{alg:AO-rotate}$\AOR(A,B)$}
	\begin{algorithmic}
		\STATE Compute the sum of outer products $H = \sum_{i=1}^{n} b_i^T a_i$   
		\STATE Decompose  $[U, S, V^T] = \mathsf{svd}(H)$ 
		\STATE Build rotation $R = U V^T$   
		\STATE \textbf{return} $\tilde B = B R$ so each $\tilde b_i = b_i R$  
	\end{algorithmic}
\end{algorithm}

Technically, this may allow $R$ to include mirror flips, in addition to rotations.  These can be detected (if the last singular value is negative) and factored out by multiplying by a near-identity matrix $R = U I_- V^T$ where $I_-$ is identity, except the last term is changed to $-1$.  We ignore this issue in this paper, and henceforth consider orthogonal matrices $R \in \OG(d)$ (which includes mirror flips) instead of just rotations $R \in \SO(d)$.  For simpler nomenclature, we still refer to $R$ as a ``rotation.''

We discuss here a few other variants of this algorithm which take into account translation and scaling between $A$ and $B$.
 
\begin{algorithm}
	\caption{\label{alg:AO}\AO$(A,B)$ \cite{HN81}}
	\begin{algorithmic}
		\STATE Compute $\bar a = \frac{1}{n} \sum_{i=1}^n a_i$ and $\bar b = \frac{1}{n} \sum_{i=1}^n b_i$  
		\STATE \underline{Center} $\hat A \leftarrow (A,\bar a)$ so each $\hat a_i = a_i - \bar a$, and similarly $\hat B \leftarrow (B, \bar b)$   
		\STATE Compute the sum of outer products $H = \sum_{i=1}^{n} \hat b_i^T \hat a_i$   
		\STATE Decompose  $[U, S, V^T] = \mathsf{svd}(H)$ 
		\STATE Build rotation $R = U V^T$   
		\STATE \underline{Rotate} $\tilde B = \hat B R$ so each $\tilde b_i = \hat b_i R$  
		\STATE Translate $B^* \leftarrow (\tilde B, - \bar a)$ so each $b^*_i = \tilde b_i + \bar a$  
		\STATE \textbf{return} $B^*$
	\end{algorithmic}
\end{algorithm}

Note that the rotation $R$ and translation $t = - \bar b + \bar a$ derived within this Algorithm \ref{alg:AO} are not exactly the optimal $(R^*, t^*)$ desired in formulation (\ref{eq:opt-R+t}).  This is because the order these are applied, and the point that the data set is rotated around is different.  In formulation (\ref{eq:opt-R+t}) the rotation is about the origin, but the dataset is not centered there, as it is in Algorithm \ref{alg:AO}.

\myParagraph{Translations}
To compare with the use of also optimizing for the choice of translations in the transformation, we formally describe this procedure here.  In particular, we can decouple rotations and translations, so to clarify the discrepancy between Algorithm \ref{alg:AO} and equation (\ref{eq:opt-R+t}), we use a modified version of the above procedure.  In particular, we first center all data sets, $\hat A \leftarrow A$ and $\hat B \leftarrow B$, and henceforth can know that they are already aligned by the optimal translation.  Then, once they are both centered, we can then call $\AOR(\hat A, \hat B)$.  
This is written explicitly and self-contained in Algorithm \ref{alg:AO-cent}.

\begin{algorithm}
	\caption{\label{alg:AO-cent}$\AOc(A,B)$}
	\begin{algorithmic}
		\STATE Compute $\bar a = \frac{1}{n} \sum_{i=1}^n a_i$ and $\bar b = \frac{1}{n} \sum_{i=1}^n b_i$  
		\STATE \underline{Center} $\hat A \leftarrow (A,\bar a)$ so each $\hat a_i = a_i - \bar a$, and similarly $\hat B \leftarrow (B, \bar b)$   
		\STATE Compute the sum of outer products $H = \sum_{i=1}^{n} \hat b_i^T \hat a_i$   
		\STATE Decompose  $[U, S, V^T] = \mathsf{svd}(H)$ 
		\STATE Build rotation $R = U V^T$   
		\STATE \underline{Rotate} $\tilde B = \hat B R$ so each $\tilde b_i = \hat b_i R$  
		\STATE \textbf{return} $\hat A$, $\tilde B$
	\end{algorithmic}
\end{algorithm}

%
%
%

\myParagraph{Scaling}
In some settings, it makes sense to align data sets by scaling one of them to fit better with the other, formulated as  
$
(R^*, t^*, s^*)  = \argmin_{s \in \R, R \in \SO(d)} \sum_{i=1}^n \|a_i - s (b_i-t) R \|^2.
$
In addition to the choices of translation and rotation, the optimal choice of scaling can also be decoupled.  

Horn \etal~\cite{Hor87} introduced two mechanisms for solving for a scaling that minimizes RMSE.  Assuming the optimal rotation $R^*$ has already been applied to obtain $\hat B$, then a closed form solution for scaling is 
$
s^* 
= \sum_{i=1}^n \langle \hat a_i, \hat b_i \rangle / \|\hat B\|_F^2.  
$
The sketch for Absolute Orientation with scaling, is in Algorithm \ref{alg:AO+scale}.

\begin{algorithm}
	\caption{\label{alg:AO+scale}$\AOs(A,B)$}
	\begin{algorithmic}
		\STATE $\tilde B \leftarrow \AOR(A,B)$
		\STATE Compute scaling $s = \sum_{i=1}^n \langle a_i, \tilde b_i \rangle / \|\tilde B\|_F^2$
		\STATE \textbf{return} $\breve B$ as $\breve B \leftarrow s \tilde B$ so for each $\breve b_i = s \tilde b_i$.  
	\end{algorithmic}
\end{algorithm}

The steps of rotation, scaling and translation fit together to give us algorithm \ref{alg:AO-scale}. 

\begin{algorithm}
	\caption{\label{alg:AO-scale}$\AOsc(A,B)$}
	\begin{algorithmic}
		\STATE Compute $\bar a = \frac{1}{n} \sum_{i=1}^n a_i$ and $\bar b = \frac{1}{n} \sum_{i=1}^n b_i$  
		\STATE \underline{Center} $\hat A \leftarrow (A,\bar a)$ so each $\hat a_i = a_i - \bar a$, and similarly $\hat B \leftarrow (B, \bar b)$   
		\STATE Compute the sum of outer products $H = \sum_{i=1}^{n} \hat b_i^T \hat a_i$   
		\STATE Decompose  $[U, S, V^T] = \mathsf{svd}(H)$ 
		\STATE Build rotation $R = U V^T$   
		\STATE \underline{Rotate} $\tilde B = \hat B R$ so each $\tilde b_i = \hat b_i R$  
		\STATE Compute scaling $s = \sum_{i=1}^n \langle a_i, b_i \rangle / \|B\|_F^2$
		\STATE \underline{Scale} $\breve B$ as $\breve B \leftarrow s \tilde B$ so for each $\breve b_i = s \tilde b_i$.  
		\STATE \textbf{return} $\tilde A, \breve B$
	\end{algorithmic}
\end{algorithm}


Horn \etal~\cite{Hor87} presented an alternative closed form choice of scaling $s$ which minimizes RMSE, but under a slightly different situation.  In this alternate formulation, $A$ must be scaled by $1/s$ and $B$ by $s$, so the new scaling is somewhere in the (geometric) middle of that for $A$ and $B$.  We found this formulation less intuitive, since the RMSE is dependent on the scale of the data, and in this setting the new scale is aligned with neither of the data sets.  
%
However, Horn \etal~\cite{Hor87} only showed that the choice of optimal scaling is invariant from the rotation in the second (less intuitive) variant.  We present a proof that this rotation invariance also holds for the first variant.  The proof uses the structure of the SVD-based solution for optimal rotation, with which Horn \etal may not have been familiar.

\begin{lemma}\label{lem:scaling}
Consider two points sets $A$ and $B$ in $\R^d$.  
After the rotation and scaling in Algorithm \ref{alg:AO+scale}, no further rotation about the origin of $\breve B$ can reduce the RMSE.  
\end{lemma}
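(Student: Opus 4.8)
\myParagraph{The plan}
Recall from Algorithm~\ref{alg:AO+scale} that $\breve B = s\tilde B$, where $\tilde B = \AOR(A,B)$ sends each $b_i$ to $\tilde b_i = b_i R$ with $R = UV^T$ obtained from $[U,S,V^T] = \mathsf{svd}(H)$ for $H = \sum_{i=1}^n b_i^T a_i$, and $s = \sum_i \langle a_i, \tilde b_i\rangle / \|\tilde B\|_F^2$. The statement to prove is that the objective $f(Q) = \sum_i \|a_i - \breve b_i Q\|^2$, as a function of a further rotation $Q \in \OG(d)$ applied about the origin, is minimized at $Q = I$. Expanding the square and using that $Q$ is orthogonal, $f(Q) = \|A\|_F^2 + \|\breve B\|_F^2 - 2\,\mathrm{tr}(Q^T H')$, where $H' = \sum_i \breve b_i^T a_i$; only the last term depends on $Q$, so it suffices to show $\mathrm{tr}(Q^T H') \le \mathrm{tr}(H')$ for every $Q \in \OG(d)$. (Equivalently, one may argue that an SVD of $H'$ has the form $U'S'V'^T$ with $U'V'^T = I$, so the rotation that the Hanson--Norris step of Algorithm~\ref{alg:AO-rotate} would extract from $\breve B$ and $A$ is the identity.)

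\myParagraph{Key computation}
The core step is to identify $H'$ using the structure of the SVD-based solution. Since $\breve b_i = s\, b_i R$, we get $H' = s\,R^T H = s\,(UV^T)^T (USV^T) = s\,V S V^T$. This matrix is symmetric, and it is positive semidefinite: $S$ is diagonal with nonnegative entries, and $s \ge 0$, because $\sum_i \langle a_i,\tilde b_i\rangle = \mathrm{tr}(R^T H) = \mathrm{tr}(VSV^T) = \mathrm{tr}(S) \ge 0$ while $\|\tilde B\|_F^2 > 0$. Thus, up to the nonnegative factor $s$, $H'$ is exactly the residual cross-correlation matrix $VSV^T$, which is symmetric PSD.

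\myParagraph{Concluding the bound}
For the factorization $H' = V(sS)V^T$ and any orthogonal $Q$, the cyclic property of the trace gives $\mathrm{tr}(Q^T H') = \mathrm{tr}\big((sS)\,V^T Q^T V\big) = \sum_{j} (sS)_{jj}\,(V^T Q^T V)_{jj} \le \sum_j (sS)_{jj} = \mathrm{tr}(H')$, since $V^T Q^T V$ is orthogonal and hence has diagonal entries of absolute value at most $1$, and $(sS)_{jj} \ge 0$. Equality holds at $Q = I$, so $Q = I$ minimizes $f$ and no further rotation of $\breve B$ about the origin reduces the RMSE.

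\myParagraph{Main obstacle}
The only substantive point is the observation that $R^T H = VSV^T$ is symmetric positive semidefinite; the rest is trace bookkeeping and the standard fact that an orthogonal matrix has unit-bounded diagonal entries. This is precisely where the SVD form of the optimal rotation is used, and is what makes the scaling $s$ decouple from the choice of rotation in this (intuitive) variant --- the structural fact the paper notes Horn \etal~may not have exploited. Minor technicalities worth a sentence: when $S$ has zero diagonal entries the minimizing $Q$ need not be unique, but $Q = I$ remains a minimizer; the argument is carried out over $\OG(d)$ so mirror flips in the $\mathsf{svd}$ step cause no issue, and the parity correction described after Algorithm~\ref{alg:AO-rotate} does not affect the conclusion; and $s = 0$ (no linear correlation) is a harmless degenerate case where every $Q$ is optimal.
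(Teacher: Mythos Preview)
Your proof is correct. The paper's argument is framed slightly differently and is a bit more economical: rather than computing the cross-correlation $H' = sR^TH = sVSV^T$ for the already rotated-and-scaled data $\breve B$ and then establishing the trace inequality, it uses the commutativity of scaling and rotation to first scale $B$ to $\check B = s^*B$ and then ask for the optimal rotation of $\check B$ onto $A$. The relevant outer-product sum is $\check H = s^*H$, whose SVD is $[U,\,s^*S,\,V^T]$; since $U$ and $V$ are unchanged, the Hanson--Norris rotation is still $R = UV^T$, and the conclusion follows by invoking the already-known optimality of that rotation. Your route is the same SVD observation viewed from the other end --- you verify directly that the residual cross-correlation after applying $R$ is symmetric PSD, so the identity is the trace-maximizing orthogonal matrix --- and in doing so you re-derive the Procrustes optimality instead of citing it. Both are short; the paper's version avoids repeating the trace bound, while yours is self-contained and, usefully, makes explicit that $s \ge 0$ (needed so that $s^*S$ really is the singular-value matrix in the paper's SVD step, a point the paper leaves implicit).
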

\vspace{-4mm}
\begin{proof}
	We analyze the SVD-based approach we use to solve for the new optimal rotation.  Since we can change the order of multiplication operations of $s b_i R$, i.e. scale then rotate, we can consider first applying $s^*$ to $B$, and then re-solving for the optimal rotation.  Define $\check B = s^* B$, so each $\check b_i = s^* b_i$.  
Now to complete the proof, we show that the optimal rotation $\check R$ derived from $A$ and $\check B$ is the same as was derived from $A$ and $B$.  
	
Computing the outer product sum
$
	\check H = \sum_{i=1}^n \check b_i^T a_i = \sum_{i=1}^n (s^* b_i)^T a_i = s^* \sum_{i=1}^n b_i^T a_i = s^* H,
$
is just the old outer product sum $H$ scaled by $s^*$.  Then its SVD is 
$
	\mathsf{svd}(\check H) \rightarrow [\check U, \check S, \check V^T] = [U, s^* S, V^T],
$
since all of the scaling is factored into the $S$ matrix.  Then since the two orthogonal matrices $\check U = U$ and $\check V = V$ are unchanged, we have that the resulting rotation 
$
	\check R = \check U \check V^T = U V^T = R
$
is also unchanged.  
\end{proof}

\myParagraph{Preserving Inner Products}
%
While Euclidean distance is a natural measure to preserve under a set of transformations, many word vector embeddings are evaluated or accessed by Euclidean inner product operations.  It is natural to ask if our transformations also maximize the sum of inner products of the aligned vectors.  Or does it maximize the sum of cosine similarity: the sum of inner products of \emph{normalized} vectors.  Indeed we observe that $\AOR(A,B)$ results in a rotation 
$
	\tilde R  = \argmax_{R \in \SO(d)} \sum_{i=1}^n \langle a_i, b_i R\rangle.   
$

\begin{lemma}\label{lem:inner}
	$\AOR(A,B)$ rotates $B$ to $\tilde B$ to maximize $\sum_{i=1}^n \langle a_i, \tilde b_i\rangle$. If $a_i \in A$ and $b_i \in B$ are normalized $\|a_i\| = \|b_i\|=1$, then the rotation maximizes the sum of cosine similarities $\sum_{i=1}^n \left\langle \frac{a_i}{\|a_i\|}, \frac{\tilde b_i}{\|\tilde b_i\|}\right \rangle$.  
\end{lemma}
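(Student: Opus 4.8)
The plan is to recognize the objective $\sum_{i=1}^n \langle a_i, b_i R\rangle$ as a trace functional in $R$ and then invoke the classical orthogonal Procrustes fact that, for the SVD $H = USV^T$ used inside $\AOR$, the matrix $R = UV^T$ maximizes $\mathrm{tr}(R^T H)$ over all $R \in \OG(d)$.

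First I would rewrite the objective. Treating each $a_i, b_i$ as a row vector, every term equals $\langle a_i, b_i R\rangle = a_i R^T b_i^T$, a scalar, hence equal to $\mathrm{tr}(a_i R^T b_i^T) = \mathrm{tr}(R^T b_i^T a_i)$ by cyclic invariance of the trace. Summing over $i$ and pulling the sum inside gives
\[
\sum_{i=1}^n \langle a_i, b_i R\rangle \;=\; \mathrm{tr}\!\Big(R^T \sum_{i=1}^n b_i^T a_i\Big) \;=\; \mathrm{tr}(R^T H),
\]
with $H$ exactly the outer-product sum formed in Algorithm~\ref{alg:AO-rotate}. As an aside, since $R$ is orthogonal $\|b_i R\| = \|b_i\|$, so $\sum_i \|a_i - b_i R\|^2 = \sum_i \|a_i\|^2 + \sum_i \|b_i\|^2 - 2\,\mathrm{tr}(R^T H)$; this is why maximizing the inner-product sum and minimizing RMSE are the same optimization, which ties the statement back to the RMSE guarantee that motivates $\AOR$.

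Next I would substitute $H = USV^T$ and set $Z = V^T R^T U$, which is orthogonal whenever $R$ is. Then $\mathrm{tr}(R^T H) = \mathrm{tr}(V^T R^T U S) = \mathrm{tr}(ZS) = \sum_{k=1}^d Z_{kk} S_{kk}$ because $S$ is diagonal. Since $S$ has nonnegative diagonal entries and each column of the orthogonal matrix $Z$ is a unit vector (so $|Z_{kk}| \le 1$), we get $\mathrm{tr}(ZS) \le \sum_{k=1}^d S_{kk}$, with equality when $Z = I$, i.e. when $R^T = VU^T$, i.e. $R = UV^T$ — precisely the rotation that $\AOR(A,B)$ returns. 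This establishes the first claim.

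The second claim is then an immediate corollary: an orthogonal $R$ preserves norms, so when $\|b_i\| = 1$ we have $\|\tilde b_i\| = \|b_i R\| = 1$, and together with $\|a_i\| = 1$ each summand of $\sum_i \langle a_i, \tilde b_i\rangle$ already coincides with the normalized inner product $\langle a_i/\|a_i\|, \tilde b_i/\|\tilde b_i\|\rangle$; hence the same rotation maximizes the sum of cosine similarities. I expect the only real care to be the bookkeeping of transposes under the row-vector convention; the inequality $\mathrm{tr}(ZS)\le\sum_k S_{kk}$ is the conceptual core and is elementary once phrased via unit-norm columns. If some singular values vanish the maximizer is not unique, but $R = UV^T$ still attains the optimum, which is all that is needed.
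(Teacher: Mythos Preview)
Your proof is correct, but it takes a different route from the paper. The paper's argument is a two-line reduction: it cites Hanson and Norris~\cite{HN81} for the fact that $\AOR$ already returns $R^* = \argmin_R \sum_i \|a_i - b_i R\|^2$, then expands $\|a_i - b_i R\|^2 = \|a_i\|^2 - 2\langle a_i, b_i R\rangle + \|b_i\|^2$ and observes that the only $R$-dependent term is the inner-product sum, so the same $R^*$ maximizes $\sum_i \langle a_i, b_i R\rangle$; the cosine-similarity clause follows exactly as you wrote it. You instead prove the optimality of $R = UV^T$ from scratch via the trace formulation and the orthogonal Procrustes inequality $\mathrm{tr}(ZS) \le \sum_k S_{kk}$. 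Your approach is more self-contained---it does not lean on the cited RMSE result and in fact re-derives it (your ``aside'' is essentially the paper's whole proof)---while the paper's approach is shorter precisely because it delegates the optimization to the literature. Both establish the lemma; the trade-off is brevity versus self-containment.
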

\begin{proof}
	From Hanson and Norris~\cite{HN81} we know $\AOR(B)$ finds a rotation $R^*$ so
	\[ 
	R^*  = \argmin_{ R \in \SO(d)} \sum_{i=1}^n \|a_i - (b_i R )\|^2.
	\]
	Expanding this equation we find 
	\[
	R^*  = \argmin_{ R \in \SO(d)} \left( \sum_{i=1}^n \|a_i\|^2 - \sum_{i=1}^n 2\langle a_i,b_iR\rangle + \sum_{i=1}^n \|b_i R\|^2 \right).  
	\]
	\update{Now, the length of a vector does not change upon rotation(R), thus, $\|b_iR\|^2 = \|b_i\|^2$. So, since $\|a_i\|^2$ and  $\|b_i\|^2$ are both lengths of vectors and thus, properties of the dataset, they do not depend on the choice of $R$} and as desired 
	\[
	R^* = \argmax_{ R \in \SO(d)}  \sum_{i=1}^n \langle a_i, b_iR\rangle.
	\]
	
	If all $a_i, b_i$ are normalized, then $R$ does not change the norm $\|\tilde b_i\| = \|b_i R\| = \|b_i\| =1$.  So for $\tilde b_i = b_i R$, each 
	$\langle a_i, \tilde b_i\rangle = \langle \frac{a_i}{\|a_i\|}, \frac{\tilde b_i}{\|\tilde b_i\|} \rangle$ 
	and hence, as desired, 
	\[
	R^* = \argmax_{R \in \SO(d)} \sum_{i=1}^n \left\langle \frac{a_i}{\|a_i\|}, \frac{b_i R}{\|b_i R\|}\right \rangle. \qedhere
	\]
\end{proof}

Several evaluations of word vector embeddings use cosine similarity, so it suggests first normalizing all vectors $a_i \in A$ and $b_i \in B$ before performing $\AOR(A,B)$.  However, we found this does not empirically work as well.  The rational is that vectors with larger norm tend to have less noise and are supported by more data.  So the unnormalized alignment effectively weights the importance of aligning the inner products of these vectors more in the sum, and this leads to a more stable method.  Hence, in general, we do not recommend this normalization preprocessing.

\subsection{Extension to Contextualized Embeddings}
\label{sec : contextual}
In recent years, contextualized embeddings such as ELMo \cite{Peters:2018} and BERT \cite{bert} have become increasingly popular, because of their ability to express the polysemity of words. A word in these frameworks is not expressed as a single vector, but rather, based on its different meanings or different contexts it has been used in.  That is, each instance of a word is represented by a different vector in the embedding space.  Our method to align individual vectors does not directly apply in this scenario.  

We propose a simple extension to handle this scenario.  Given a word $w_i$ with two separate contextual embeddings, let these embedding vectors be two sets $A_i = \{a_{i,1}, a_{i,2}, \ldots a_{m_{A,i}}\}$ and $B_i = \{b_{i,1}, b_{i,2},  \ldots, b_{m_{B,i}}\}$ of sizes $m_{A,i}$ and $m_{B,i}$, respectively.  Then our method, instead of aligning a single pair of vectors for each word, it aligns \emph{all} vector pairs for each word.  For instance, for finding the optimal rotation, this involves an alignment for $n$ words, each $i$th word $w_i$, then the outer product matrix is defined
\[
H = \sum_{i=1}^n \frac{1}{m_{A,i} m_{B,i}} \sum_{j =1}^{m_{A,i}} \sum_{j'=1}^{m_{B,i}} a_{i,j}^T b_{i,j'},
\]
where each set of all-pairs is weighted equally for each $i$ (this is accomplished by dividing by the number of such pairs $m_{A,i} m_{B,i}$.)

This all-pairs alignment can be computationally expensive as the number of instances of each word $m_{A,i}$ and $m_{B,i}$ increase; even if we only use $10$ instances of each word, in each embedding, this increases the number of alignments by a factor $100$.  
However, we observe, in each step the set $A_i$ and $B_i$ can be replaced by their averages
\[
 \bar a_i = \frac{1}{m_{A_i}} \sum_{j=1}^{m_{A,i}} a_{i,j}
 \text{ and }
 \bar b_i = \frac{1}{m_{B_i}} \sum_{j=1}^{m_{B,i}} b_{i,j}.
\]
Then the overall means $\bar b$, $\bar a$, outer product $H$, and scaling $s$ are the same using all instances or the mean instance.  

\begin{lemma}\label{lem:allpair}
The alignments found using all-pair alignment when each word has multiple instances in each embedding is equivalent to that computed by aligning the averages of each set of instances.  
\end{lemma}
\begin{proof}
We need to analyze the $4$ quantities computed as part of any transformation: the two averages, the outer product, and the scale.  
In short, these are all linear vector operations (sum, outer product, inner product), so a vector average can be factored out.  

For each average 
\[
\bar a = \frac{1}{n} \sum_{i=1}^n \frac{1}{m_{A,i}} \sum_{j=1}^{m_{A,i}} a_{i,j} = \frac{1}{n} \sum_{i=1}^n \bar a_i,
\]
and similarly for $\bar b$, the calculations are equivalent.  

For the outer product 
\begin{align*}
H &= \sum_{i=1}^n \frac{1}{m_{A,i} m_{B,i}} \sum_{j =1}^{m_{A,i}} \sum_{j'=1}^{m_{B,i}} a_{i,j}^T b_{i,j'}
\\ &= 
\sum_{i=1}^n  \left( \frac{1}{m_{A,i}} \sum_{j =1}^{m_{A,i}} a_{i,j} \right)^T \left( \frac{1}{ m_{B,i}} \sum_{j'=1}^{m_{B,i}} b_{i,j'} \right)
\\ &= 
\sum_{i=1}^n  \bar a_i^T \bar b_i.
\end{align*}

And finally for the scale
\begin{align*}
s 
&= 
\sum_{i=1}^n \frac{1}{m_{A,i} m_{B,i}} \sum_{j =1}^{m_{A,i}} \sum_{j'=1}^{m_{B,i}} \langle a_{i,j}, b_{i,j'} \rangle / \|B\|_F^2
\\ &=
\sum_{i=1}^n \left \langle \frac{1}{m_{A,i}} \sum_{j =1}^{m_{A,i}} a_{i,j},   \frac{1}{m_{B,i}} \sum_{j'=1}^{m_{B,i}}  b_{i,j'} \right \rangle / \|B\|_F^2
\\ &= 
\sum_{i=1}^n \langle \bar a_i, \bar b_i \rangle / \|B\|_F^2,
\end{align*}
where 
\[
\|B\|_F^2 
= 
\sum_{i=1}^n \left \| \frac{1}{m_{B,i}} \sum_{j=1}^{m_{B,i}} b_{i,j} \right\|^2
=
\sum_{i=1}^n \|\bar b_i\|^2.
\]
Note that the normalization term $\|B\|_F^2$ is defined on the average sum of instances for the all-pairs version, since this is a quadratic operation, and otherwise does not factor out.  
\end{proof}

\subsection{Related Approaches}

\label{sec:related}
As mentioned, Smith \etal (2017)~\cite{SmithTHH17} use Algorithm \ref{alg:AO-rotate} to align \WordToVec word embeddings on English and Italian corpuses, and show that this simple approach is effective in translation. Our work can be seen as building on this, in that we show how to interpret the intrinsic accuracy of such an alignment, how to align word vector corpuses created by different mechanisms, and when to use which variant of the closed form solutions.  Additionally, we confirm some of their language translation results and show that it extends to when the embedding mechanisms for the different language corpuses are not the same (e.g., one by \WordToVec and one by \GloVe), as demonstrated in Section \ref{app : langauges}.  

There are several other methods in the literature which attempt to jointly compute embeddings of datasets so that they are aligned, for instance in jointly embedding corpuses in multiple languages~\cite{Herm,Mik3}.  The goal of the approaches we study is to circumvent these more complex joint alignments. A couple of very recent papers propose methods to align embeddings after their construction, but focus on \emph{affine transformations}, as opposed to the more restrictive but distance preserving rotations of our method.  
Bollegala \etal (2017)~\cite{Art1} uses gradient descent, for parameter $\gamma$, to directly optimize
\[
\argmin_{M \in \R^{d \times d}} \sum_{i=1}^n \|a_i - b_i M \|^2 + \gamma \|M\|^2_F.  
\]
%


Another approach, by Sahin \etal (2017)~\cite{Sahin} uses Low Rank Alignment (LRA), an extension of aligning manifolds from LLE~\cite{Boucher}.  This approach has a 2-step but closed form solution to find an affine transformation applied to both embeddings simultaneously. 
Neither approach directly optimizes for the optimal transformation, and requires regularization parameters; this implies if embeddings start far apart, they remain further apart than if they start closer.  
Both find affine transformations $M$ over $\R^{d \times d}$, not a rotation over the non-convex $\OG(d)$ as does our approach.  This changes the Euclidean distance found in the original embedding to a Mahalanobis distance that will change the order of nearest neighbors under Euclidian and cosine distance.  
Finally, the LRA approach, requires an eigendecomposition of an $2n \times 2n$ matrix, where as ours only requires this of a $d \times d$ matrix, so LRA is far less scalable.


\begin{figure*}
	\includegraphics[width =.27\linewidth]{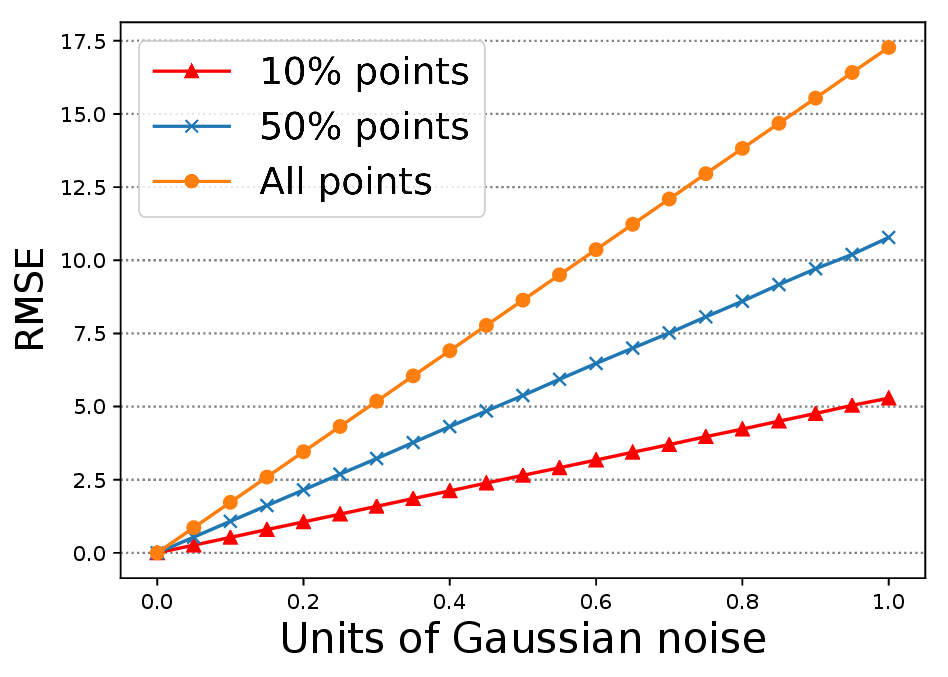} \;\;\;
	\includegraphics[width =.33\linewidth]{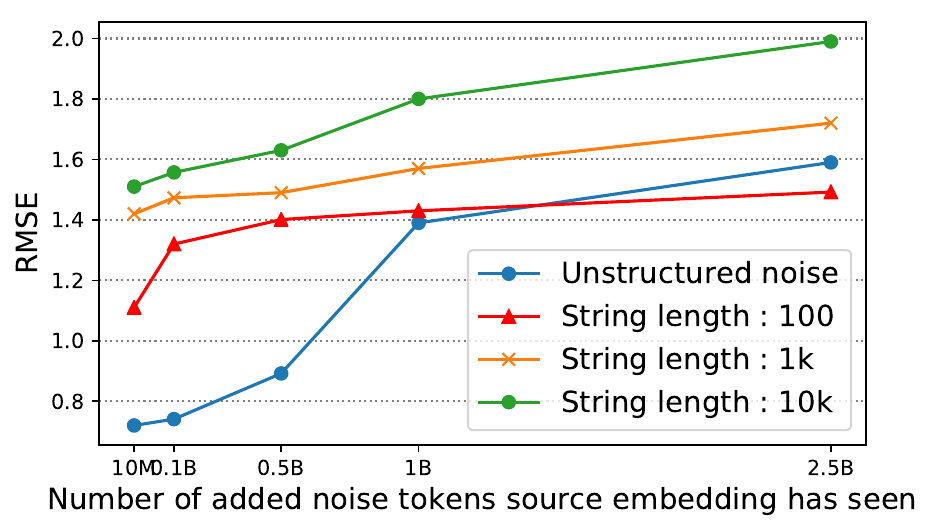} \;\;\;
	\includegraphics[width=.36\linewidth]{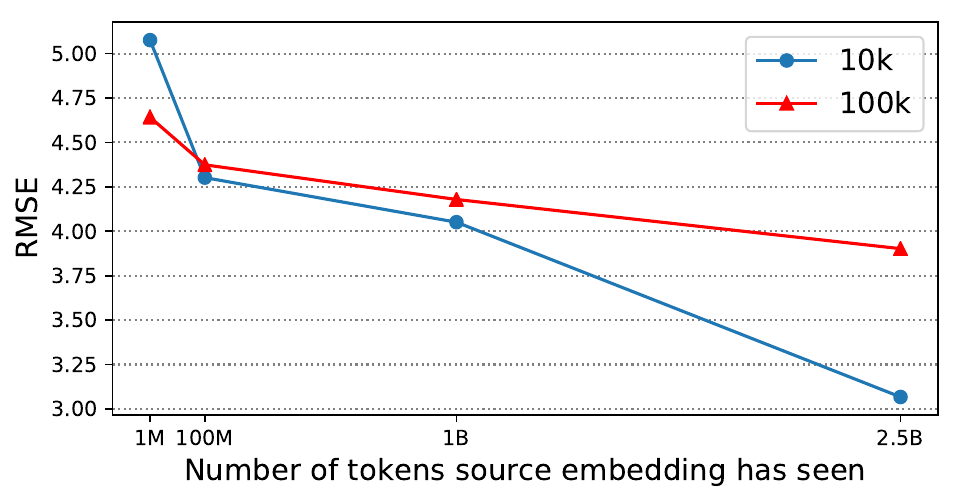} 
	
	\vspace{-3mm}
	\caption{RMSE Error after noise and \AOR{} alignment: 
		Left: adding Gaussian noise to $10\%$, $50\%$ or all points.
		Middle: adding structured and unstructured noise before embedding. 
		Right: incrementally added data.  
		\vspace{-2mm}} 
	\label{fig:RMSE-noise}
\end{figure*}

\section{EVALUATING ACCURACY OF VARIANTS}

\label{sec:exp}
We evaluate the effectiveness of our methods on a variety of scenarios, typically using the Root Mean Square Error:  
$ 
\mathsf{RMSE}(A,B) = \sqrt{\frac{1}{|A|}\sum_{i=1}^{|A|} \| a_{i} - b_{i}\|^2}.
$
We fix the embedding dimension of each $A$ (the target) and $B$ (the source) at $300$, and assume $|A| = |B| = n=100{,}000$ or in some cases $n' = |A| = |B| = 10{,}000$.

We consider embeddings with \GloVe~\cite{Art1} (our default), or \WordToVec~\cite{Mik1,Art5} with Gensim~\cite{rehurek_lrec}, or occasionally \RAW which is just the $L^1$ normalized word count vectors embedded with SVD~\cite{Lev1}. 
We obtain all these three embeddings for our experiments using our default dataset is the \update{$4.57$ billion} token English Wikipedia dump, which we found to be made up of $243K$ vocabulary words (distinct tokens). \update{For \WordToVec, the Gensim~\cite{rehurek_lrec} library provides code for obtaining embeddings of a desired dimensionality, and for \GloVe, the code~\cite{Art1} is provided by the authors themselves. To obtain \RAW embeddings, we run a simple bag of words model which enumerates for each word, how many times it appeared with other words in the vocabulary in a sentence, to give us a vector representation for the word. The \RAW word vectors, thus have the same dimension as that of the vocabulary itself. This when normalized captures the pointwise mutual information and is called the Pointwise Mutual Information (PMI) Matrix.
After embedding using each of these three mechanisms, we select the top 100K most frequent words and their corresponding embeddings for our experiments.}

We compare against existing \GloVe embeddings of
Wikipedia + Gigaword (\s{G(WG)}, \update{6 billion} tokens, 400K vocab), 
Common Crawl (\s{G(CC42)}, \update{42 billion} tokens, 1.9M vocab), and
Common Crawl (\s{G(CC840)}, \update{840 billion} tokens, 2.2M vocab), 
and the existing \WordToVec embedding 
of Google News (\s{W(GN)}, \update{100 billion} tokens, 3 million vocab). \update{All of these embeddings are available online (https://nlp.stanford.edu/projects/glove/, https://code.google.com/archive/p/word2vec/) and were downloaded.} 

When aligning \GloVe embeddings to other \GloVe embeddings we use \AOR.  When aligning embeddings from different sources we use \AOs. 





\myParagraph{Default Data Settings} 
In each embedding, we always consider a consistent vocabulary of $n=100{,}000$ words.  
To decide on this set, we found the $n$ most frequent words used in the default Wikipedia dataset and that were embedded by \GloVe. 
In one case, we compare against smaller datasets and then only work with a small vocabulary of size $n' = 10{,}000$ found the same way.  

For each embedding we represent each word as a vector of dimension $d=300$.  \update{Note that \RAW originally uses an $n$-dimensional vector.  We reduce this to $d$-dimensions by taking its SVD, as advocated by Levy \etal \cite{Lev1}. They demonstrate how \WordToVec implicitly captures the information as the Shifted Pointwise Mutual Information Matrix (SPMI) in low dimensions. They further demonstrate that computing the SVD of the SPMI matrix maintains the structure captured by the full dimensional matrix.}


%
%
%

\subsection{Calibrating RMSE}

\label{sec:calibrate}
In order to make sense of the meaning of an RMSE score, we calibrate it to the effect of some easier to understand distortions.  
To start, we make a copy of $A$ (the default \s{G(W)} embedding -- we use this notation to signify a \GloVe embedding \s{G($\cdot$)} or the default Wikipedia corpus \s{W})) and apply an arbitrary rotation, translation, and scaling of it to obtain a new embedding $B$.  Invoking $\hat A, \hat B \leftarrow \AOsc(A,B)$, we expect that $\mathsf{RMSE}(\hat A, \hat B) = 0$;  we observe RMSE values on the order of $10^{-14}$, indeed almost $0$ withstanding numerical rounding.  



\myParagraph{Gaussian Noise}
Next we add Gaussian noise directly to the embedding.  That is we define an embedding $B$ so that each $b_i = a_i + g_i$ where $g_i \sim N_d(0, \sigma I)$, where $N_d(\mu,\Sigma)$ is a $d$-dimensional Gaussian distribution, and $\sigma$ is a standard deviation parameter.  
Then we measure $\mathsf{RMSE}(\hat A, \hat B)$ from 
$\hat A, \hat B \leftarrow \textsc{\AOR}(A,B)$.
Figure \ref{fig:RMSE-noise}(left) shows the effects for various $\sigma$ values, and also when only added to $10\%$ and $50\%$ of the points.  
We observe the noise is linear, and achieves an RMSE of $2$ to $5$ with $\sigma \in [0.1,0.3]$.  


\myParagraph{Noise before embedding.}
Next, we append noisy, \emph{unstructured} text into the Wikipedia dataset with $1$ billion tokens.  We specifically do this by generating random sequences of $m$ tokens, drawn uniformly from the $n = 10$K most frequent words; we use $m = \{0.01, 0.1, 0.5, 1, 2.5\}$ billion.  
We then extract embeddings for the same vocabulary of $n=100$K words as before, from both datasets, and use \AOR{} to linearly transform the noisy one to the one without noise.  
As observed in Figure \ref{fig:RMSE-noise}(middle), this only changes from about $0.7$ to $1.6$ RMSE.  
The embeddings seem rather resilient to this sort of noise, even when we add more tokens than the original data.  

We perform a similar experiment of adding structured text; we repeat a sequence made of $s = \{100, 1000, 10{,}000 \}$ tokens of medium frequency so the total added is again $m = \{10M, 100M, 500M, 1B,$ $2.5B\}$.  Again in Figure \ref{fig:RMSE-noise}(middle), perhaps surprisingly, this only increases the noise slightly, when compared to the unstructured setting.  
This can be explained since only a small percentage of the vocabulary is affected by this noise, and by comparing to the Gaussian noise, when only added to $10\%$ of the data, it has about a third of the RMSE as when added to all data.

%
%

\myParagraph{Incremental Data}
As a model sees more data, it is able to make better predictions and calibrate itself more acurately. This comes at a higher cost of computation and time. If after a certain point, adding data does not really affect the model, it may be a good trade off to use a smaller dataset to make an embedding almost equivalent to the one the larger dataset would produce.

We evaluate this relationship using the RMSE values when a \GloVe embedding from a smaller dataset $B$ is incrementally aligned to larger datasets $A$ using \AOR. We do this by starting off with a dataset of the first $1$ million tokens of Wikipedia (1M). We then add data sequentially to it, to create datasets of sizes of 100M, 1B, 2.5B or 4.57B tokens.  
For each dataset we create \GloVe embeddings. Then we align each dataset using $\AOR(A,B)$ where $A$ (the target) is always the larger of the two data set, and $B$ (the source) is rotated and is the smaller of the two.  

Figure \ref{fig:RMSE-noise}(right)  shows the result using a vocabulary of $n=100$K and $n' = 10$K.  The small $n'$ is also used since for smaller datasets, many of the top $100$K words are not seen.  We observe that even this change in data set size, decreasing from $4.57$B tokens to $2.5$B still results in substantial RMSE.  However aligning with fewer but better represented words starts to show better results, supporting use of weighted variants.   


\begin{table}[b]
	\caption{\label{tbl:RMSE-Datasets+scale}RMSE after alignment for embeddings.  
		Top: Created from different datasets.  
		Bottom: Created by different embeddings; uses \AOs {} mapping rows onto columns and changing scale.  }
	\centering
	\small
	\begin{tabular}{r||cccc}
		\hline
		$\rightarrow$ & \s{G(W)}  & \s{\hspace{-2mm}G(WG)}  & \s{\hspace{-2mm}G(CC42)} & \s{\hspace{-2mm}G(CC840)\hspace{-2mm}}\\ \hline
		\s{G(W)}  & -   & 4.56 & 5.167 & 6.148\\
		
		\s{G(WG)} & 4.56  & - & 5.986  & 6.876\\ 
		\s{\hspace{-1mm}G(CC42)} & 5.167  & 5.986 & -  & 5.507\\ 
		\s{\hspace{-1mm}G(CC840)} & 6.148 & 6.876 & 5.507 & - \\ \hline	
	\end{tabular}		
	\hspace{.25in}
	\centering
	\begin{tabular}{r||ccc}
		\hline
		$\rightarrow$ & \RAW  & \GloVe & \hspace{-1mm}\WordToVec \\ \hline
		\RAW  & -   & 4.12 & 14.73 \\
		
		\GloVe & 0.045 & - & 12.93\\
		\WordToVec &0.043  & 3.68 & -  \\ \hline
		\hspace{-2mm} Scale to \GloVe  & 25 & 1 & 0.25 \\
		\hspace{-2mm} Scale from \GloVe  & 0.011  & 1 & 3 \\ \hline
	\end{tabular}
	\normalsize
\end{table}

\begin{table*}[t]
	\caption{\label{tbl:Sim-Analogy}Spearman coefficient scores for synonym and analogy tests between the aligned \WordToVec to \GloVe embeddings and between \GloVe embeddings of Wikipedia and CC42 dataset; r,s and t stand for the functions of optimal rotation, scaling and translation respectively and w() is the weighted version of that function. When computing these scores across two embeddings, the best values are printed in bold. }
	\centering
	
	\small
	\begin{tabular}{r||cccccccccc}
		\hline
		\multirow{2}{*}{Test Sets} & \multirow{2}{*}{\GloVe} & \multirow{2}{*}{\WordToVec } & \multicolumn{8}{c}{\WordToVec to \GloVe} 
		\\
		\cline{4-11}
		& & & untransformed & r & r + s & r + t & r + s + t & normalized & w(r) & w(r+s+t)
		\\
		\hline
		\RG & 0.614 & 0.696 & 0.041 & 0.584 & 0.584 & 0.570 & 0.594 & 0.553 & 0.592 & \textbf{0.597}\\
		\WSim & 0.623 & 0.659 & 0.064 & 0.624 & 0.624 & 0.611 & 0.652 & 0.604 & 0.657 & \textbf{0.664}\\
		\MC &0.669 & 0.818 & 0.013 & 0.868 & 0.868 & 0.843 &0.873 & 0.743 & 0.878 & \textbf{0.882}\\
		\Simlex & 0.296 & 0.342 & 0.012 & 0.278 &0.278 & 0.269 &0.314 & 0.261 & 0.314 & \textbf{0.316}\\ \hline
		\SYN & 0.587 & 0.582 & 0.000 & 0.501 &0.525 & 0.517 & 0.528 &0.493 &0.535 & \textbf{0.539}\\
		\SEM & 0.691 & 0.722 & 0.0001 & 0.624 & 0.656 & 0.633 & 0.697 &0.604 & 0.702 & \textbf{0.712}\\
		\hline
	\end{tabular}

	\vspace{0.5mm}
	
	\begin{tabular}{r||cccccccccc}
		\hline
		\multirow{2}{*}{Test Sets} & \multirow{2}{*}{\s{G(W)}} & \multirow{2}{*}{\s{G(CC42)}} & \multicolumn{8}{c}{\s{G(W)} to \s{G(CC42)}}  
		\\
		\cline{4-11}
		& & & untransformed & r &  r + s & r + t & r + s + t  & normalized & w(r) & w(r+s+t)
		\\
		\hline
		\RG & 0.614 & 0.817 & 0.363 & 0.818 & 0.818 & 0.811 &0.821 & 0.815 &0.818 & \textbf{0.825}\\
		\WSim & 0.623 & 0.63 & 0.017 & 0.618 & 0.618 & 0.615 &0.618& 0.601 &0.616 & \textbf{0.637}\\
		\MC & 0.669 & 0.786 & 0.259 & 0.766 & 0.766 & 0.732 &0.768 & 0.705 &0.771 & \textbf{0.774}\\
		\Simlex & 0.296 & 0.372 & 0.035 & 0.343 & 0.343 & 0.339 & \textbf{0.346} & 0.296 & \textbf{0.346} & \textbf{0.346}\\ \hline
		\SYN & 0.587  & 0.625 &0.00 & 0.566  &  \textbf{0.576} & 0.572 & \textbf{0.576} & 0.502 &\textbf{0.576} & \textbf{0.576}\\
		\SEM & 0.691  &  0.741 &0.00& 0.676  &  0.684 & 0.676 & 0.688 & 0.565 & 0.690 & \textbf{0.695}\\
		\hline
	\end{tabular}
	\normalsize
\end{table*}

\subsection{Changing Datasets And Embeddings}

\label{sec:RMSE-data}

Now with a sense of how to calibrate the meaning of RMSE, we can investigate the effect of changing the dataset entirely or changing the embedding mechanism.

\myParagraph{Dependence of Datasets}
Table \ref{tbl:RMSE-Datasets+scale}(top) shows the RMSE when the $4$ \GloVe embeddings are aligned with \AOR, either as a target or source.  
The alignment of \s{G(W)} and \s{G(WG)} has less error than either to \s{G(CC42)} and \s{G(CC840)}, likely because they have substantial overlap in the source data (both draw from Wikipedia).  In all cases, the error is roughly on the scale of adding Gaussian noise with $\sigma \in [0.25, 0.35]$ to the embeddings, or reducing the dataset to $10$M to $100$M tokens. 
This is much more alignment error than in other experiments, indicating that the change in the source data set (and likely its size) has a much larger effect than the embedding mechanism.

\myParagraph{Dependence on Embedding Mechanism}
We now fix the data set (the default $4.57$B Wikipedia dataset \s{W}), and observe the effect of changing the embedding mechanism: using \GloVe, \WordToVec, and \RAW.  
We now use \AOs{} instead of \AOR, since the different mechanisms tend to align vectors at drastically different scales.

Table \ref{tbl:RMSE-Datasets+scale}(bottom) shows the RMSE error of the alignments; the columns show the target ($A$) and the rows show the source dataset ($B$).  
This difference in target and source is significant because the scale inherent in these alignments change, and with it, so does the RMSE. Also as shown, the scale parameter $s^*$ from \GloVe to \WordToVec in \AOs {} is approximately $3$ (and non-symmetrically about $0.25$ in the other direction from \WordToVec to \GloVe). This means for the same alignment, we expect the RMSE to be between $3$ to $4$ ($\approx 1/0.25$) times larger as well.  

However, with each column, with the same target scale, we can compare alignment RMSE.  We observe the differences are not too large, all roughly equivalent to Gaussian noise with $\sigma = 0.25$ or using only $1$B to $2.5$B tokens in the dataset.  
Interestingly, this is less error that changing the source dataset; consider the \GloVe column for a fair comparison.  This corroborates that the embeddings find some common structure, capturing the same linear structures, analogies, and similarities.  And changing the datasets is a more significant effect.

%
%
%

\subsection{Similarity and Analogies after Alignment}
\label{sec:similarity}
The \GloVe and \WordToVec embeddings both perform well under different benchmark similarity and analogy tests. These results will be unaffected by rotations or scaling.  
Here we evaluate how these tests transfer under alignment.  Using the default Wikipedia dataset, we use several variants of \AO{} to align \GloVe and \WordToVec embeddings.  Then given a synonym pair $(i,j)$ we check whether $b_j \in B$ (after alignment) is in the neighborhood of $a_i$.

More specifically, we use $4$ common similarity test sets, which we measure with cosine similarity~\cite{Lev2}: 
Rubenstein-Goodenough (\RG, 65 word pairs) \cite{rg}, 
Miller-Charles (\MC, 30 word pairs) \cite{mc}, 
WordSimilarity-353 (\WSim, 353 word pairs) \cite{wsim} and 
SimLex-999 (\Simlex, 999 word pairs) \cite{simlex}.
We use the Spearman correlation coefficient (in $[-1,1]$, larger is better) to aggregate scores on these tests; it compares the ranking of cosine similarity of $a_i$ to the paired aligned word $b_j$, to the rankings from a human generated similarity score.  

Table \ref{tbl:Sim-Analogy} shows the scores on just the \GloVe and \WordToVec embeddings, and then across these aligned datasets.  
To understand how the variants of \AO{} compare, we compute the scores after each of the various optimal transformation types are applied: rotation, then scaling, then translation, and finally we consider if we normalize all vectors before alignment to maximize cosine similarities. 
Before transformation (``untransformed'') the across-dataset comparison is very poor, close to $0$; that is, extrinsically there is very little information carried over.  However, alignment with just \AOR{} achieves scores nearly as good as, and sometimes better than on the original datasets.   \WordToVec scores higher than \GloVe, and the across-dataset scores are typically between these two scores.  Adding scaling with \AOs{} has no affect on the scores on the similarity test because they are measured with cosine similarity.  However also applying the optimal translation does increase the scores even though it optimizes Euclidean distance and not cosine distance.  Perhaps surprisingly, applying rotation along with translation \emph{and} scaling improves more than just applying rotation and translation.  This method applies scaling after the dataset is centered, so this then alters the inner products, and in a useful way.  

We perform the same experiments on $2$ Google analogy datasets~\cite{Mik1}:
\SEM has $8869$ analogies and
\SYN has $10675$ analogies. 
These are of the form ``\s{A:B::C:D}'' (e.g., ``man:woman::king:queen''), and we evaluate across data sets by measuring if vector $\ve{D}$ is among the nearest neighbors in data set $A$ of vector $\ve{C} + (\ve{B} - \ve{A})$ in data set $B$.  
The results are similar to the synonym tests, where \AOR{} alignment across-datasets performs similar to within either embedding, and scaling and rotation provided small further improvement.  In this case, performing rotation and scaling improves upon just rotation.  This is because the analogies are accessing something more complicated about the embedding, and so adjusting the scale more aligns the Euclidean distance and hence the vector structure needed to succeed in analogies.  

The right part of the table shows the effect of various weightings.  Normalization makes the similarity and analogy scores worse, but weighting by the norms consistently increases the scores.  Moreover, also scaling and rotating (e.g., as w(r+s+t)) improves the scores further.  

We also align \s{G(W)} to \s{G(CC42)}, to observe the effect of only changing the dataset.   The \s{G(CC42)} dataset performs better itself; it uses more data.  The small similarity tests (\s{RG},\s{MC}) show some extrinsic information is captured without any alignment, but otherwise across-embedding scores have a similar pattern to across-dataset scores.

Next in Table \ref{tbl : normalized} we further investigate the effect of various weighting (or normalizing) before alignment.  In these test we show the effect on \AOR{} with three types of weighting.  As before we simply apply \AOR{} on all 100K words.  But we also find the optimal $R$ on only the most frequent 10K words using \AOR{}, and then again using \AON{} on just these 10K words.  The rotation and evaluation is still on all 100K words needed for the tests.  Surprisingly \AON(10K) performs better than \AOR(10K), and comparably to \AOR(100K).  This indicates that similarity optimization is useful when the words all have sufficient data to embed them properly.  



\begin{table}
	
	\caption{Synonym and Analogy scores from rotations (applied to all words) learned on 100K, top 10K words, and top 10K words normalized.}
	\label{tbl : normalized}
	\centering
	\small
	\begin{tabular}{c||ccc}
		
		\hline
		&  \textsc{AO+R} & \textsc{AO+R} & \AON \\ 
		&  100K & 10K & 10K \\ \hline
		RG & 0.584 & 0.576 & 0.588 \\
		WSIM & 0.624 & 0.612 & 0.643\\
		MC & 0.868 & 0.817 & 0.851\\
		SIMLEX & 0.278 & 0.292 & 0.308\\ \hline
		SYN & 0.501 & 0.505 & 0.511\\
		SEM & 0.624 & 0.616 & 0.616\\
		\hline
	\end{tabular}
	
	\normalsize
\end{table}



\begin{table}[t]
	\caption{\label{tbl:baseline}Modified similarity tests (on only top 10K words) after alignment by Affine Transformation (AffTrans), LRA, and \AOR {} of Wikipedia and CC42 \GloVe embeddings. }
	\centering
	\small
	\begin{tabular}{r||ccc|c}
		\hline
		& LRA & AffTrans & \textsc{AO+R} & \textsc{AO+R} \\
		Test Sets & 10K & 10K & 10K & 100K \\
		\hline
		\RG & 0.701 & 0.301 & 0.728 & 0.818\\
		\WSim & 0.616 & 0.269 & 0.612 &0.618\\
		\MC & 0.719 & 0.412 & 0.722 & 0.766\\
		\Simlex & 0.327 & 0.126 & 0.340 & 0.343\\
		\hline 
	\end{tabular}
	
	\normalsize
\end{table}

\subsection{Comparison to Baselines }
Next, we perform similarity tests to compare against alignment implementations of methods by Sahin \etal (2017)~\cite{Sahin} (LRA) and Bollegela \etal (2017)~\cite{Art1} (Affine Transformations).  We reimplemented their algorithms, but did not spend significant time to optimize the parameters; recall our method requires no hyperparameters.  
We only used the top $n' = 10$K words for these transformations because these other methods were much more time and memory intensive.  
We only computed similarities among pairs in the top $10$K words for fairness (about two-thirds of the word pairs evaluated, so the scores do not match other tables), and did not perform analogy tests since fewer than one-third of analogies fully showed up in the top $10$K.  
Table \ref{tbl:baseline} shows results for aligning the \s{G(W)} and \s{G(CC42)} embeddings with these approaches.  
Our \AO-based approach does significantly better than Bollegela \etal's (2017)~\cite{Art1} Affine Transformations and generally better than Sahin \etal's (2017) \cite{Sahin} LRA.  Our advantage over LRA increases when aligning all $n=100$K words; by comparison LRA ran out of memory since it requires an $n \times n$ dense matrix decomposition.

\begin{table*}[t]
	\label{tbl:word freq}
	\caption{RMSE variation with word frequency in (a) \GloVe Wiki to \GloVe Common Crawl and (b) \WordToVec to \GloVe evaluated for Wiki dataset.}
	\centering
	\begin{tabular}{rr||ccc}
		\hline
		Word & Frequency in Wiki  & Norm in (\GloVe) Wiki & Wiki to CC (42B)  & \WordToVec to \GloVe
		\\ \hline 
		talk & 187513532 & 9.681& 8.608 & 12.462 \\
		november &2340726 & 7.847 & 5.26 &  8.614 \\
		man & 1035008 & 8.648 & 4.25 & 5.161\\
		statistical & 83531 & 5.891 & 4.63& 5.097\\
		bubbles & 11200 & 5.455 & 4.66 & 3.768 \\
		skateboard & 3804 & 5.670 & 5.714& 3.891 \\
		emoji & 1761 & 6.090 & 6.781 & 2.402\\
		haymaker & 705 & 4.108 & 5.951  & 1.573\\
		\hline
	\end{tabular}
\end{table*}

\begin{table*}
	\caption{\label{tbl:boosting weighted}Similarity and Analogy tests before and after alignment and combining embeddings derived from different techniques and datasets by \AOWts.  Best scores in {\bf bold}.}
	\centering
	\small
	\begin{tabular}{rccccccc}
		\hline
		TestSets & \s{G(W)}  & \hspace{-1mm}\s{W(W)} & \hspace{-2mm}[\s{G(W)}$\odot$\s{W(W)}]\hspace{-1mm} & \s{W(GN)} & \hspace{-2mm}[\s{G(W)}$\odot$\s{W(GN)}]\hspace{-1mm} & \hspace{-1mm}\s{G(CC840)} & \hspace{-2mm}[\s{G(CC840)}$\odot$\s{W(GN)}]\hspace{-2mm} \\
		\hline
		\RG & 0.614 & 0.696 & 0.716 & 0.760 &   {\bf 0.836} & 0.768 & 0.810 \\
		\WSim & 0.623 & 0.659 & 0.695 & 0.678 & 0.708 & 0.722 &  {\bf 0.740} \\
		\MC & 0.669 & 0.818 & {\bf 0.869} & 0.800 & 0.811 & 0.798 & 0.847 \\
		\Simlex & 0.296 & 0.342 & 0.368 & 0.367 & 0.394 & 0.408 &  {\bf 0.446} \\
		\hline 
		\SYN & 0.587& 0.582  & 0.592 & 0.595 & 0.607 &  {\bf 0.618} & 0.609 \\
		\SEM & 0.691 & 0.722 &  {\bf 0.759} &  0.713 & 0.733 & 0.729 & 0.733 \\
		\hline
	\end{tabular}
	\normalsize
	\vspace{-3mm}		
\end{table*}

\subsection{Dependence of RMSE variation with Word Frequency }
\label{sec:word-freq}

Table \ref{tbl:word freq} shows some sampled words of various frequencies in the Wikipedia data set.  A word that is more frequently seen in a corpus is generally seen with a larger proportion of other words and contexts, and thus as observed in the table, has a vector representation that has larger norm than a word which has low frequency.  This results in the contribution of high frequency words in the rotation matrix $H$, computed for minimizing the RMSE, to also be larger.  
This larger frequency, and larger norm, also manifests itself in the error after alignment, as shown in the last two columns of Table \ref{tbl:word freq}, both between data sets and between embedding mechanisms.  The relation in the amount of RMSE between words appears even more correlated when between embedding mechanisms (in this case \WordToVec and \GloVe).  The low-frequency words likely exhibit some baseline noise in the case with different data sets (Wiki and CC(42B)), which obscures this relationship for low-frequency words.

\subsection{Discussion on the Right Variant}
\label{sec:which-AO}


Most of the gain using \AO{} is achieved by just finding the optimal rotation $R$ with \AOR.  
However, consistent improvement can be found by weighting the large points more using \AOW{} and by applying translation or scaling, and slightly more by applying both.  

When different datasets are aligned using the same mechanism (e.g., both with \GloVe or both with \WordToVec), then it is debatable whether scaling and translation is necessary, since scaling does not affect cosine similarity, and translation changes intrinsic inner product properties.  However, using a weighting to put more weight on longer (and implied more robustly embedded) words does not alter any intrinsic properties, and only seems to create better alignments.  

When datasets are embedded with different mechanisms (e.g., one with \WordToVec and one with \GloVe) then they are not scaled properly with respect to each other.  In this case, it is important to find the optimal scaling to put them in a consistent interpretable scale, and to ensure analogy relations are optimized.  So we strongly recommend using scaling in this setting.  

%
%
%


\section{EMBEDDING ALIGNMENT : APPLICATIONS}

\label{app : Applications}

We highlight a few applications which may be served by this alignment, and comparison mechanisms that we design and demonstrate their effectiveness.

\subsection{Boosting via Ensembles}

\label{sec:boosting}
A direct application of combining different embeddings can be to increase its robustness.  We show that ensembles of pre-computed word embeddings found via different mechanisms and on different datasets can boost the performance on the similarity and analogy tests beyond that of any single mechanism or dataset.  The boosting strategy we use here is just simple averaging of the corresponding words after the embeddings have been aligned.  

Table \ref{tbl:boosting weighted} shows the performance of these combined embedding in three experiments.  
The first set shows the default Wikipedia data set under \GloVe (\s{G(W)}), under \WordToVec (\s{W(W)}), and combined ([\s{G(W)}$\odot$\s{W(W)}]).  
The second set shows \WordToVec embedding of GoogleNews (\s{W(GN)}), and combined ([\s{G(W)}$\odot$\s{W(GN)}]) with \s{G(W)}.  
The third set shows \GloVe embedding of CommonCrawl (840B) (\s{G(CC840)}) and then combined with \s{W(GN)} as [\s{G(CC840)}$\odot$\s{W(GN)}].  
Combining two embeddings using \AOWts{} consistently boosts the performance on similarity and analogy tests.  Very similar boosting results occur independent of the precise alignment mechanism (e.g., using \AOsc).  
The best score on each experiment is in bold, and in 5 out of 6 cases, it is from a combined embedding.  Moreover, except for this one case, the combined embedding always performs better on all tests that both of the individual embeddings, and in this one case, \s{G(CC804)$\odot$W(GN)} still outperforms \s{W(GN)} on \SEM analogies.  
For instance, remarkably, \s{G(W)$\odot$W(W)} which only uses the default $4.57B$ token Wikipedia dataset, performs better or nearly as well as \s{W(GN)} which uses $100B$ tokens.  Moreover, in some cases the improvement is significant; on the large similarities test \Simlex, the  [\s{G(CC840)}$\odot$\s{W(GN)}] score is $0.443$ or $0.446$ with weights, whereas the best score without boosting is only $0.408$ using \s{G(CC840)}.

\begin{table*}[h]
	\caption{\label{tbl:translation-spanish}
		The 5 closest neighbors of a word before and after alignment by \AOR (between English - Spanish). Target word (translation) in \textbf{bold}.}
	\centering 
	\small
	\begin{tabular}{r||cc}
		\hline
		Word  & Neighbors before alignment & Neighbors after alignment  \\
		\hline
		woman & her, young, man, girl, mother & her, girl, \textbf{mujer}, mother, man\\
		week & month, day, year, monday, time &  days, \textbf{semana}, year, day, month\\
		casa & apartamento, casas, palacio, residencia, habitaci & casas, \textbf{home}, homes, habitaci, apartamento\\
		caballo & caballos, caballer, jinete, jinetes, equitaci & \textbf{horse}, horses, caballos, jinete\\
		sol & sombra,luna,solar,amanecer,ciello & \textbf{sun}, moon, luna, solar, sombra \\
		\hline
	\end{tabular}
	\normalsize
\end{table*}

\begin{table*}[h]
	\caption{The 5 closest neighbors of a word before and after alignment by \AO (between English - French). Target word (translation) in \textbf{bold}.}
	\centering 
	\small
	\begin{tabular}{r||cc}
		\hline
		Word  & Neighbors before alignment & Neighbors after alignment  \\
		\hline
		woman & her, young, man, girl, mother & her, young, man, \textbf{femme}, la\\
		week & month, day, year, monday, time & month, day, year, \textbf{semaine}, start\\
		heureux & amoureux, plaisir, rire, gens, vivre & \textbf{happy}, plaisir, loving, amoureux, rire\\
		cheval & chein, petit, bateau, pied, jeu & \textbf{horse}, dog, chien, red, petit\\
		daughter & father, mother, son, her, husband & mother, \textbf{fille}, husband,  mere, her \\
		\hline
	\end{tabular}
	\normalsize
	\label{tbl : translation}
\end{table*}

\subsection{Aligning Embeddings Across Languages And Embeddings }
\label{app : langauges}

Word embeddings have been used to place word vectors from multiple languages in the same space~\cite{Herm,Mik3}. These either do not perform that well in monolingual semantic tasks as noted in Luong, Pham and Manning ~\cite{Luong} or use learned affine transformations~\cite{Mik3}, which distort distances and do not have closed form solutions.  Smith \etal \cite{SmithTHH17} use the equivalent of \AOR{} to translate between word embeddings from different languages that have been extracted using the same method. We extend that here to verify that no matter the embedding mechanism, we can translate using a variant of \AO{}. We use the ability to choose the right variant of Absolute Orientation as per Section \ref{sec:which-AO} to orient different embeddings onto each other coherently.
We use the default English \GloVe embedding from Wikipedia and the FastText \url{https://github.com/facebookresearch/fastText} embedding for Spanish.  FastText is yet another unsupervised learning paradigm for obtaining vector representations for words which uses a lot of concepts from \WordToVec, skipgram models and bag of words. As presented, these two have been derived using different methods and are thus oriented differently in 300 dimensional space. We extract the embeddings for the most frequent $5{,}000$ words from the default English Wikipedia dataset (that have translations in Spanish) and their translations in Spanish and align them using \AOWts. We test before and after alignment, for each of these $10{,}000$ words, if their translation is among their nearest $1$, $5$, and $10$ neighbors.
Before alignment, the fraction of words with its translation among its closest $1$, $5$ and $10$ nearest neighbors is $0.00$, $0.160$, and $0.160$ respectively, while after alignment it is $0.372$, $0.623$ and $0.726$, respectively. 


We perform a cross-validation experiment to see how this alignment applies to new words not explicitly aligned.  On learning the rotation matrix above, we apply it to a set of $1000$ new 'test' Spanish words (the translations of the next $1000$ most frequent English words) and bring it into the same space as that of English words as before.  We test these $2000$ new words in the embedded and aligned space of $12{,}000$ words (now $6{,}000$ from each language).  
Before alignment, the fraction of times their translations are among the closest $1$, $5$ and $10$ neighbors are $0.00$, $0.00$, and $0.00$, respectively. After alignment it is $0.311$, $0.689$, and $0.747$,  respectively (comparable to results and setup in Mikolov \etal~\cite{Mik3}, using jointly learned affine transformations).

We perform a similar experiment between English and French, and see similar results. We first obtain $300$ dimensional embeddings for English Wikipedia dump using \GloVe, and for French words from the FastText embeddings.   
Then, we extract the embeddings for the most frequent $10{,}000$ words from the default Wikipedia dataset (that have translations in French) and their translations in French and align them using \AOWts.  We test before and after alignment, for each of these $10{,}000$ words, if their translation is among their nearest $1$, $5$, and $10$ neighbors.
Before alignment, the fraction of words with its translation among its closest $1$, $5$ and $10$ nearest neighbors is $0.00$, $0.054$, and $0.054$ respectively, while after alignment it is $0.478$, $0.755$ and $0.810$, respectively. 
Table \ref{tbl : translation} lists some examples before and after translation.

We again perform a cross-validation experiment to see how this alignment applies to new words not explicitly aligned.  On learning the rotation matrix above, we apply it to a set of $1000$ new 'test' French words (the translations of the next $1000$ most frequent English words in the default dataset) and bring it into the same space as that of English words as before.  We test in this space of $22{,}000$ words now, if their translations are among the closest $1$, $5$ and $10$ nearest neighbors of the $2000$ new words ($1000$ French and their translations in English).  Before alignment, the fraction of times their translations are among the closest $1$, $5$ and $10$ neighbors are $0.00$, $0.00$, and $0.00$, respectively.   After alignment it is $0.307$, $0.513$, and $0.698$,  respectively.

\subsection{Aligning Multiple Languages onto Same Space}

\update{As demonstrated in Section \ref{app : langauges}, pairwise alignment of words from different languages needs relatively few points to find the alignment to achieve good accuracy in translation between the two languages for a much larger set of words. This allows us to have a low cost operation to map words of one language to their corresponding translated words in the another language. This additionally leads us to a follow-up application. For many language pairs (say languages $L_1$ and $L_2$), we might not have a known dictionary of corresponding word-pairs. In such cases, finding an alignment for enabling translation can be impeded. However, for each of these languages $L_1$ and $L_2$, if corresponding words to a third language $L_3$ is known, aligning both $L_1$ and $L_2$ onto $L_3$ also brings $L_1$ and $L_2$ into the same space. Thus, translation of words from $L_1$ and $L_2$ is enabled without having a set of corresponding seed words in them by which to define the alignment. 
Aligning multiple languages onto the same space can thus, aid in multi-way translation. Further, for low resource languages or pairs of languages for whom, only a very small set of translations, i.e., few corresponding points are known, aligning each of these languages to a more common language with which a larger correspondence is known, can help translation.}


\update{To demonstrate this, we pick languages $L_1$ and $L_2$ to be Spanish and French respectively. We also pick the common language $L_3$ to be English, to whose word embedding space we align $L_1$ and $L_2$ to. In Table \ref{tbl : multi}, in the first column, we have Spanish to French translations before alignment. As expected, the top $1$, $5$ and $10$ neighboring word accuracies (as evaluated in section \ref{app : langauges}) are poor (in fact $0$ accuracy).} In the second column, we have accuracies after aligning them onto each other using a pool of 2000 words for which we know translations, i.e., their one-to-one correspondences. Next, in the third column, we align both Spanish and French onto English, using the same set of 2000 words and then compare the accuracies for translations from Spanish to French. We find that the top $1$, $5$ and $10$ accuracies are comparable between columns 2 and 3. 
\update{Thus Spanish-French translation was enabled by knowing Spanish-English and French-English associations. This multi-way translation enabled by a third language's association leads us to many possibilities of aligning low-resource languages to each other easily.}   

\begin{table}[t]
    \caption{Translating Spanish to French by aligning directly as compared to aligning both to English}
    \centering
    \begin{tabular}{c||ccc}
        \hline
         Top n accuracy & Unaligned & Pairwise Aligned & Indirectly Aligned  \\
         \hline 
         n = 1 & 0.0 & 0.312 & 0.277\\
         n = 5 & 0.0 & 0.635 & 0.601\\
         n = 10 & 0.0 & 0.723 & 0.707\\
        \hline
    \end{tabular}
\label{tbl : multi}
\end{table}


\section{Discussion}
\label{sec:discussion}
We have provided simple, closed-form method to align word embeddings.  
Code can be found on github (\url{https://github.com/sunipa/Abs-Orientation}).  	
It allows for transformations for any subset of translation, rotation, and scaling.  These operations all preserve the intrinsic Euclidean structure which has been shown to give rise to linear structures which allows for learning tasks like analogies, synonyms, and classification.  All of these operations also preserve the Euclidean distances, so it does not affect the tasks which are measured using this distance; note the scaling also scales this distance, but does not change its order.  Our experiments indicate that the rotation is essential for a good alignment, and the scaling is needed to compare embeddings generated by different mechanisms (e.g., \GloVe and \WordToVec) and while helpful, not necessarily when the data set is changed.  Also the translation provides minor but consistent improvement.  

We also show how to explicitly optimize cosine similarity by first normalizing all words -- however, this does not perform as well as instead optimizing Euclidean distance.  Rather we propose to weight words in the alignment by their norms, and this further improves the alignment because it emphasize the words which have more stable embeddings.  

This alignment enables new ways that word embeddings can be compared.  This has the potential to shed light on the differences and similarity between them.  For instance, as observed in other ways, common linear substructures are present in both \GloVe and \WordToVec, and these structures can be aligned and recovered, further indicating that it is a well-supported feature inherent to the underlying language (and dataset).  We also show that changing the embedding mechanism has less of an effect than changing the data set, as long as that data set is meaningful.  Unstructured noise added to the input data set appears not to have much effect, but changing from the $4.57$B token Wikipedia data set to the $840$B token Common Crawl data set has a large effect.  

Additionally, we show that by aligning various embeddings, their characteristics as measured by standard analogy and synonym tests can be transferred from one embedding to another.  We also demonstrate that cross-language alignment can aid in word translation even when coming from completely different embedding mechanisms, even in a cross-validation setting.  This cross embedding-mechanism alignment opens the door for many other types of alignment word embeddings with embeddings generated from graphs, images, or any other data set which has some useful word labels.  

Finally, we showed that we can ``boost'' embeddings without revisiting the (sometimes quite enormous) raw data.  This is surprisingly effective in improving scores on similarity and analogy test, results in the best known scores from embeddings on these tests.  For instance, on the \Simlex analogy test we improve upon the best known scores by almost $10\%$ in the Spearman correlation coefficient.  
There are many other potential applications of these techniques for aligning high-dimensional data embeddings. We propose some scenarios where they may be used in the following section.

\subsection{Other Applications}
  Here, we enumerate a few applications-- we do not experiment on many of these due to the extreme computational cost of performing an analysis of the effect (i.e., the baseline approaches of not using our techniques can be prohibitively expensive, or too qualitative to effectively evaluate).

\begin{enumerate}
	\item Common Crawl is one of the largest textual data sources available.  Moreover, it consistently gets updated to include the ever increasing data on the internet. Each of these datasets has over 800B tokens, and extracting embeddings from these can be computationally expensive. However, extracting embeddings from the additional data not included in the previous update of Common Crawl should be significantly less expensive.  Aligning an embedding from just the new data, and performing a weighted average with the older larger one may work as well or better than the embedding made from scratch.  
	
	\item 
	A similar weighted average alignment can help with specialized data. Consider data from scientific journals only, or of domain-specific biomedical terms.  Embeddings from just these data sets would be very specialized and each words would have a specific word sense based on the domain.  Aligning these to a gigantic corpus can enrich the specialized domain related regions on the larger embedding. 
	
	\item 	Tags and phrases in English can be single words or a string of words. Orienting an embedding of tags/phrases along say Common Crawl using an intersection of the single words in the two datasets can help place multi-worded tags or phrases around words related to them. This can help derive meaning from random or unknown phrases. Images also often are annotated with a set of tag words.  So orienting a set of tags can help orient images meaningfully among words.
	
	\item 
	These methods can also be applied to even more heterogeneous embeddings than discussed above.  
	We can orient heterogeneous embeddings derived from a variety of methods e.g. for graphs including node2vec~\cite{node2vec} or DeepWalk~\cite{DeepWalk}, and others \cite{CZC17,GF17,DCS17}, images~\cite{SIFT,SURF}, and for kernel methods~\cite{RR07,RR08}.  
	For instance, RDF data can contain shorthand query phrases like 'president children spouse' which answers the question 'who are the spouses and children of presidents?'  By orienting each word along word embeddings from Common Crawl, this may help answer similar questions even more abstractly.  
	Heterogeneous networks have a mixture of node types. If there is an intersection of some nodes (and node types) between any two embeddings (heterogeneous or homogeneous), we can orient them meaningfully.
	
\item Customer data collected at a company over different years and subsidiaries can be embedded using different features (such as income bracket, credit score, location etc{.} depending on the company). Using common customers over the year, diverse sources and new users can be added meaningfully to the embedding and inferred about, without embedding all of the data points from scratch. 
Moreover, embedding the same users from different years and aligning them can also help deduce the change in their features over time.  
\end{enumerate}

\section{Acknowledgements}
Thanks to NSF CCF-1350888, ACI-1443046, CNS-1514520, CNS-1564287, IIS-1816149, and NVidia Corporation.

\bibliographystyle{plain}
\bibliography{icpbib}

\begin{thebibliography}{10}

\bibitem{AHB87}
K.~S. Arun, T.~S. Huang, and S.~D. Blostein.
\newblock Least-squares fitting of two 3-d points sets.
\newblock {\em IEEE Transactions on Pattern Analysis and Machine Intelligence},
  9(5):698--700, 1987.

\bibitem{SURF}
Herbet Bay, Tinne Tuytelaars, and Luc~Van Gool.
\newblock {SURF}: Speeded up robust features.
\newblock In {\em ECCV}, 2006.

\bibitem{BM92}
Paul~J. Besl and Neil~D. McKay.
\newblock A method for registration of 3-d shapes.
\newblock {\em IEEE Transactions on Pattern Analysis and Machine Intelligence},
  14(2):239 -- 256, 1992.

\bibitem{bojanowski2016enriching}
Piotr Bojanowski, Edouard Grave, Armand Joulin, and Tomas Mikolov.
\newblock Enriching word vectors with subword information, 2016.

\bibitem{Art1}
Danushka Bollegala, Kohei Hayashi, and Ken ichi Kawarabayash.
\newblock Learning linear transformations between counting-based and
  prediction-based word embeddings.
\newblock {\em PloS ONE, 2017}, 2017.

\bibitem{CZC17}
Hongyun Cai, Vincent~W. Zheng, and Kevin Chen-Chuan Chang.
\newblock A comprehensive survey of graph embedding: Problems, techniques and
  applications.
\newblock Technical report, arXiV:1709:07604, 2017.

\bibitem{CM92}
Y.~Chen and G.~Medioni.
\newblock Object modelling by registration of multiple range images.
\newblock {\em Image and Vision Computing}, 10:145--155, 1992.

\bibitem{bert}
Jacob Devlin, Ming{-}Wei Chang, Kenton Lee, and Kristina Toutanova.
\newblock {BERT:} pre-training of deep bidirectional transformers for language
  understanding.
\newblock In {\em Proceedings of {NAACL-HLT} 2019}, pages 4171--4186, 2019.

\bibitem{DCS17}
Yuxiao Dong, Nitesh~V. Chawla, and Ananthram Swami.
\newblock metapath2vec: Scalable representation learning for heterogeneous
  networks.
\newblock In {\em KDD}, 2017.

\bibitem{ELF97}
D.W. Eggert, A.~Lorusso, and R.B. Fisher.
\newblock Estimating 3-d rigid body transformations: A comparison of four major
  algorithms.
\newblock {\em Machine Vision and Applications}, 9:272--290, 1997.

\bibitem{FH83}
O.~D. Faugeras and M.~Hebert.
\newblock A 3-d recognition and positioning algorithm using geometric matching
  between primitive surfaces.
\newblock In {\em Proceedings International Joint Conference on Artificial
  Inteligence}, volume~8, pages 996--1002, August 1983.

\bibitem{wsim}
L~Finkelstein, E~Gabrilovich, Y~Matias, E~Rivlin, Z~Solan, G~Wolfman, and etal.
\newblock Placing search in context : The concept revisited.
\newblock In {\em ACM Transactions of Information Systems}, volume~20, pages
  116--131, 2002.

\bibitem{GF17}
Palash Goyal and Emilio Ferrara.
\newblock Graph embedding techniques, applications, and performance: A survey.
\newblock Technical report, arXiV:1705.02801, 2017.

\bibitem{node2vec}
Aditya Grover and Jure Leskovec.
\newblock node2vec: Scalanle feature learning for networks.
\newblock In {\em KDD}, 2016.

\bibitem{HN81}
Richard~J. Hanson and Michael~J. Norris.
\newblock Analysis of measurements based on the singular value decomposition.
\newblock {\em SIAM Journal of Scientific and Statistical Computing},
  27(3):363--373, 1981.

\bibitem{Herm}
K.~M. {Hermann} and P.~{Blunsom}.
\newblock {Multilingual Distributed Representations without Word Alignment}.
\newblock {\em ArXiv e-prints}, December 2013.

\bibitem{simlex}
F~Hill, R~Reichart, and A~Korhonen.
\newblock Simlex-999 : Evaluating semantic models with (genuine) similarity
  estimation.
\newblock In {\em Computational Linguistics}, volume~41, pages 665--695, 2015.

\bibitem{Hor87}
Berthold K.~P. Horn.
\newblock Closed-form solution of absolute orientation using unit quaternions.
\newblock {\em Journal of the Optical Society of America A}, 4, April 1987.

\bibitem{arm2016bag}
Armand Joulin, Edouard Grave, Piotr Bojanowski, and Tomas Mikolov.
\newblock Bag of tricks for efficient text classification, 2016.

\bibitem{Lev1}
Omer Levy and Yoav Goldberg.
\newblock Neural word embedding as implicit matrix factorization.
\newblock In {\em NIPS}, 2013.

\bibitem{Lev2}
Omer Levy and Yoav Goldberg.
\newblock Linguistic regularities of sparse and explicit word representations.
\newblock In {\em CoNLL}, 2014.

\bibitem{SIFT}
David~G. Lowe.
\newblock Distinctive image features from scale-invariant keypoints.
\newblock {\em International Journal of Computer Vision}, 2004.

\bibitem{Luong}
Minh-Thang Luong, Hieu Pham, and Christopher~D. Manning.
\newblock Bilingual word representationa with monolingual quality in mind.
\newblock In {\em NAACL-HLT}, pages 151--159, 2015.

\bibitem{Art5}
Tomas Mikolov, Kai Chen, Greg Corrado, and Jeffrey Dean.
\newblock Efficient estimation of word representations in vector space.
\newblock Technical report, arXiv:1301.3781, 2013.

\bibitem{Mik3}
Tomas {Mikolov}, Q.~V. {Le}, and Ilya {Sutskever}.
\newblock {Exploiting Similarities among Languages for Machine Translation}.
\newblock {\em ArXiv e-prints}, September 2013.

\bibitem{Mik1}
Tomas Mikolov, Ilya Sutskever, Kai Chen, Greg Corrado, and Jeffrey Dean.
\newblock Distributed representations of words and phrases and their
  compositionality.
\newblock In {\em NIPS}, pages 3111--3119, 2013.

\bibitem{mc}
G~Miller and W~Charles.
\newblock Contextual correlates of semantic similarity.
\newblock In {\em Language and Cognitive Processes}, volume~6, pages 1--28,
  1998.

\bibitem{Art3}
Jeffrey Pennington, Richard Socher, and Christopher~D. Manning.
\newblock Glove: Global vectors for word representation.
\newblock In {\em EMNLP}, 2014.

\bibitem{DeepWalk}
Bryan Perozzi, Rami Al-Rfou, and Steven Skiena.
\newblock Deepwalk: Online learning of social representations.
\newblock In {\em KDD}, 2014.

\bibitem{Peters:2018}
Matthew~E. Peters, Mark Neumann, Mohit Iyyer, Matt Gardner, Christopher Clark,
  Kenton Lee, and Luke Zettlemoyer.
\newblock Deep contextualized word representations.
\newblock In {\em Proc. of NAACL}, 2018.

\bibitem{RR07}
Ali Rahimi and Ben Recht.
\newblock Random features for large-scale kernel machines.
\newblock In {\em NIPS}, 2007.

\bibitem{RR08}
Ali Rahimi and Ben Recht.
\newblock Weighted sums of random kitchen sinks: Replacing minimization with
  randomization in learning.
\newblock In {\em NIPS}, 2008.

\bibitem{rehurek_lrec}
Radim {\v R}eh{\r u}{\v r}ek and Petr Sojka.
\newblock {Software Framework for Topic Modelling with Large Corpora}.
\newblock In {\em {Proceedings of the LREC 2010 Workshop on New Challenges for
  NLP Frameworks}}, pages 45--50, Valletta, Malta, May 2010. ELRA.
\newblock \url{http://is.muni.cz/publication/884893/en}.

\bibitem{rg}
H~Rubenstein and JB~Goodenough.
\newblock Contextual correlates of synonymy.
\newblock In {\em Communications of the ACM}, volume~8, pages 627--633, 1965.

\bibitem{Sahin}
Cem {Safak Sahin}, Rajmonda~S. {Caceres}, Brandon {Oselio}, and Wiliam~M.
  {Campbell}.
\newblock {Consistent Alignment of Word Embedding Models}.
\newblock {\em ArXiv e-prints}, February 2017.

\bibitem{Sch66}
Peter~H. Sch\"{o}nemann.
\newblock A generalized solution to the orthogonal procrustes problem.
\newblock {\em Psychometrika}, 31(1):1--10, March 1966.

\bibitem{SS87}
Jacob~T. Schwartz and Micha Sharir.
\newblock Identification of partially obscured objects in two and three
  dimensions by matching noisy characteristic curves.
\newblock {\em International Journal on Robotics Research}, 6(2), Summer 1987.

\bibitem{hashkernels}
Qinfeng Shi, James Patterson, Gideon Dror, John Langford, Alex Smola, and {SVN}
  Vishwanathan.
\newblock Hash kernels for structured data.
\newblock {\em JMLR}, 10:2615--2637, 2009.

\bibitem{SmithTHH17}
Samuel~L. Smith, David H.~P. Turban, Steven Hamblin, and Nils~Y. Hammerla.
\newblock Offline bilingual word vectors, orthogonal transformations and the
  inverted softmax.
\newblock {\em ICLR}.

\bibitem{Boucher}
Sridhar~Mahadevan Thomas~Boucher, CJ~Carey and M.~Darby Dyar.
\newblock Aligning mixed manifolds.
\newblock In {\em AAAI}, 2014.

\bibitem{WSV91}
Michael~W. Walker, Lejun Shao, and Richard~A. Volz.
\newblock Estimating {3-D} location parameters using dual number quaternions.
\newblock {\em CVGIP: Image Understanding}, 54(3):358--367, November 1991.

\bibitem{featurehash}
Kilian Weinberger, Anirban Dasgupta, John Attenberg, John Langford, and Alex
  Smola.
\newblock Feature hashing for large scale multitask learning.
\newblock In {\em ICML 2009}.

\end{thebibliography}
\vspace{-5mm}

\end{document}